\documentclass{article}

\usepackage{microtype}
\usepackage{graphicx}
\usepackage{subfigure}
\usepackage{booktabs} 
\usepackage{wrapfig}

\usepackage{algorithm}
\usepackage{algorithmic}

\usepackage{bbm}
\usepackage{bm}
\usepackage{amssymb}
\usepackage{amsthm}
\usepackage{mathrsfs}
\usepackage{color}
\usepackage{graphicx}
\usepackage{algorithm}
\usepackage{algorithmic}
\usepackage{subfigure}
\usepackage{epsfig}
\usepackage{epsf}
\usepackage{booktabs}
\usepackage{tabularx}
\usepackage{tabulary}
\usepackage{multirow}
\usepackage{graphics}
\usepackage{longtable}
\usepackage{subfigure}
\usepackage[table]{xcolor}    
\usepackage{hyperref}


\usepackage[accepted]{icml2019}

\usepackage{bm}
\usepackage{amsmath}
\usepackage{verbatim}
\usepackage{bm}

\usepackage{url}            

\usepackage{graphicx} 
\usepackage{subfigure}
%
\newcommand{\tabincell}[2]{\begin{tabular}{@{}#1@{}}#2\end{tabular}}

\newtheorem{thm}{Theorem}


\def\e{{\bf e}}

\def\w{{\bf w}}
\def\x{{\bf x}}
\def\y{{\bf y}}

\def\Q{{\bf Q}}





\def\algo{\textsc{PGS}}


\icmltitlerunning{Reliable Weakly Supervised Learning: Maximize Gain and Maintain Safeness}
\begin{document}
	\twocolumn[
	\icmltitle{Reliable Weakly Supervised Learning: Maximize Gain and Maintain Safeness}
	
	\icmlsetsymbol{equal}{*}
	
		\begin{icmlauthorlist}
		\icmlauthor{Lan-Zhe Guo}{to}
		\icmlauthor{Yu-Feng Li}{to}
		\icmlauthor{Ming Li}{to}
		\icmlauthor{Jin-Feng Yi}{ed}
		\icmlauthor{Bo-Wen Zhou}{ed}
		\icmlauthor{Zhi-Hua Zhou}{to}
	\end{icmlauthorlist}

\begin{center}
National Key Laboratory for Novel Software Technology, Nanjing University, Nanjing 210023, China\\
JD AI Research, China \\
\{guolz, liyf, lim, zhouzh\}@lamda.nju.edu.cn, \{yijinfeng, bowen.zhou\}@jd.com
\end{center}
	
	

	\icmlkeywords{Machine Learning, ICML}
	
	\vskip 0.3in
	]
	\begin{abstract}
		Weakly supervised data are widespread and have attracted much attention. However, since label quality is often difficult to guarantee, sometimes the use of weakly supervised data will lead to unsatisfactory performance, i.e., performance degradation or poor performance gains. Moreover, it is usually not feasible to manually increase the label quality, which results in weakly supervised learning being somewhat difficult to rely on. In view of this crucial issue, this paper proposes a simple and novel weakly supervised learning framework. We guide the optimization of label quality through a small amount of validation data, and to ensure the safeness of performance while maximizing performance gain. As validation set is a good approximation for describing generalization risk, it can effectively avoid the unsatisfactory performance caused by incorrect data distribution assumptions. We formalize this underlying consideration into a novel Bi-Level optimization and give an effective solution. Extensive experimental results verify that the new framework achieves impressive performance on weakly supervised learning with a small amount of validation data.   
	\end{abstract}
	
	\section{Introduction}
	
	Weakly supervised data~\cite{zhou2017brief} commonly appears in machine learning tasks due to the high cost in collecting of a large amount of strong supervision data. Examples include incomplete supervision data, i.e., only a small subset of training data is given with labels; inexact supervision data, i.e., only coarse-grained labels are given, and inaccurate supervision data, i.e., the given labels are not always ground-truth~\cite{zhou2017brief}. 
	Learning with weakly supervised data plays an important role in various applications, including text categorization~\cite{joachims1999transductive}, semantic segmentation~\cite{vezhnevets2012weakly}, bioinformatics~\cite{kasabov2004transductive}, natural language processing~\cite{huang2014learning}, medical care~\cite{wang2017chestx}, etc.
	
	A large number of efforts for weakly supervised learning (WSL) have been devoted~\cite{zhou2017brief,chapelle2006semi,settles2012active,frenay2014classification}, e.g., semi-supervised learning methods~\cite{chapelle2006semi}, label noise learning methods~\cite{frenay2014classification}, etc. Existing methods can roughly be categorized into two classes from the objective perspective. One is to maximize the performance gain based on some distribution assumption and the other is to maintain the performance safeness based on the worst-case analysis.
	
	The first objective of WSL focuses on maximizing the performance by making assumptions on the true label structure. For example, to maximize learning performance on unlabeled data, transductive support vector machines (TSVMs)~\cite{joachims1999transductive} adopt the low-density assumption~\cite{chapelle2005semi}, i.e., the decision boundary should lie in a low-density region. Label propagation algorithms~\cite{zhu2003semi} spread the soft labels from few labeled nodes to the whole graph based on the manifold assumptions~\cite{zhou2004learning}. To maximize learning performance with noisy labeled data, one can try to improve the label quality by assuming that the mislabeled items can be detected by measures like classification confidence~\cite{sun2007identifying}, model complexity~\cite{gamberger2000noise}, influence functions~\cite{koh2017understanding}.
	
	It is generally expected that in comparison with a supervised algorithm that uses only labeled data, WSL can help improve the performance by using more weakly supervised data. However, it is noteworthy that bad matching of problem structure with model assumption can lead to degradation in learning performance. The fact that WSL does not always help has been observed by multiple researchers,  \cite{cozman2003semi,chawla2005learning,chapelle2006semi,van2009knowledge,frenay2014classification,li2015towards} pointed out that there are cases in which the use of the weakly supervised data may degenerate the performance, making it even worse after WSL. 
	
	Therefore, the other objective of WSL that focus on maintaining the safeness has attracted significant attraction in recent years. \cite{li2011towards,li2015towards} builds safe semi-supervised SVMs through optimizing the worst-case performance gain given a set of candidate low-density separators. \cite{balsubramani2015optimally} proposes to learn a safe prediction given that the ground-truth label assignment is restricted to a specific candidate set. \cite{guo2018general} proposes a general safe WSL formulation given that the ground-truth can be constructed by a set of base learners and they optimize the worst-case performance gain. These methods are developed to alleviate the performance degeneration problem of WSL based on analyzing the worst-case of the ground-truth, and these efforts typically cause conservative performance improvement.

	\begin{figure}[t] 
		\centering
		\small
		\begin{tabular}{c@{ }@{ }c@{ }@{ }c}
			\begin{minipage}{1\linewidth}
				\includegraphics[width=1\textwidth]{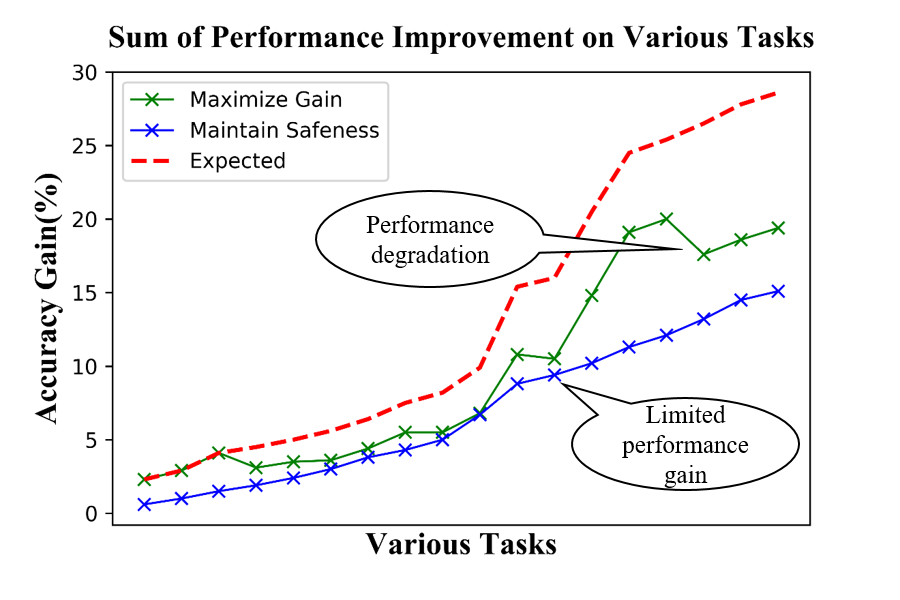}\\
			\end{minipage} 
		\end{tabular}
		\vspace{-.2cm}
		\caption{Illustration for the behaviors of three WSL methodologies: maximize gain, maintain safeness and user expected performance.}\label{fig:motivation}
		\vspace{-.4cm}
	\end{figure}
	
	However, in many practical tasks, we not only need a strong performance improvement but also need to ensure safeness, that is, we need to take into account both the best and the worst case of learning performance. Both are indispensable, because any dissatisfaction may not be what we hope to happen in the real applications, causing the results of weakly supervised learning being somewhat difficult to rely on. 
	
	In this work, we consider the question of how to optimize label quality for WSL such that both the best-case and the worst-case performance are considered. Figure~\ref{fig:motivation} illustrates the motivation for this work. Specifically, let $p_0$ be the performance derived from baseline supervised strategies without using weakly supervised data and $p_*$ be the performance derived from weakly supervised learning with correct data assumption. How to optimize the training set labels, such that the learned performance $p$ could be closely related to $p_*$, meanwhile it would not be worse than $p_0$. 
	
	In view of this issue, this paper proposes a simple and novel weakly supervised learning framework \algo \ (maximize Performance Gain while maintaining Safeness). Our framework uses a small amount of validation data to clearly guide label quality optimization. Because verification set is a good approximation for describing generalization risk, it can provide reliable and powerful guidance for performance improvement and performance safeness. At the same time, the verification set can effectively avoid the performance problems caused by incorrect data distribution assumptions. This new perspective of weakly supervised learning can be formalized into a novel Bi-Level optimization~\cite{bard2013practical} where one optimization problem is nested within another problem and can be effectively addressed for both convex and non-convex situations. A large number of experimental results also clearly confirm that the new framework achieves impressive performance. In summary, our contributions in this work are as follows:
	\begin{itemize}
		\item We propose a new weakly supervised learning framework that tries to maximize the performance gain and maintain safeness at the same time.
		\item We propose a reliable solution by formulating the problem into an optimization with effective algorithm, as well as justified by a large number of experiments. 
	\end{itemize}
	
	\begin{figure*}[t] 
		\centering
		\small
		\includegraphics[width=\textwidth]{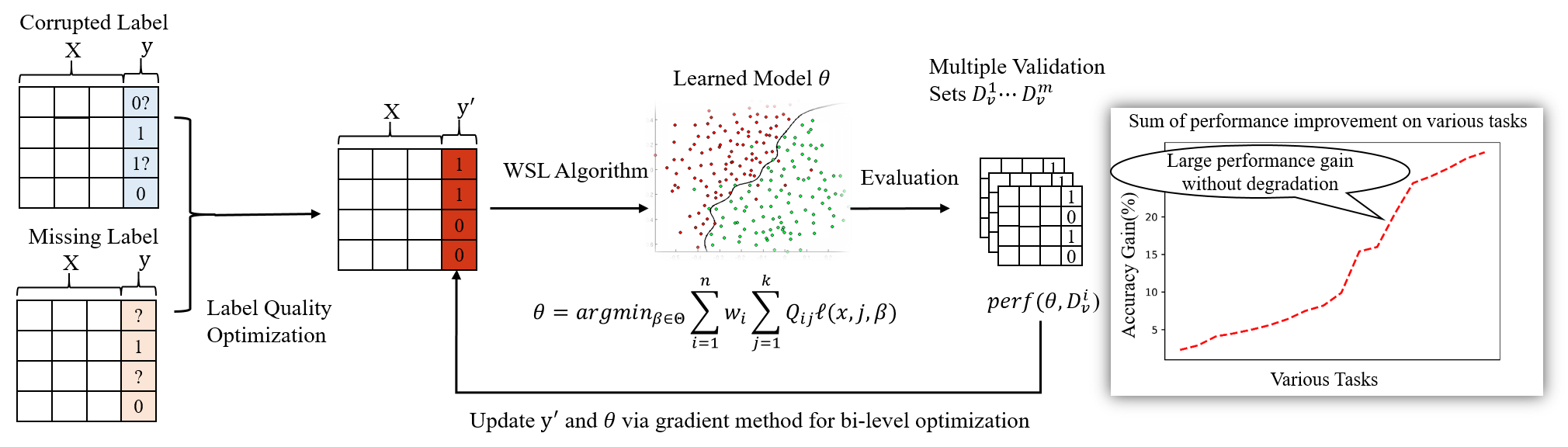}
		\caption{Illustration of the \algo\ framework. }\label{fig:framework}
	\end{figure*}
	In the following, we first review several related WSL frameworks and then present the \algo\ framework. Next, we show the experiment. Finally, we conclude this paper.
	
	\section{Two WSL Frameworks and Limitations} 
	\label{Proposed Method}
	
	\textbf{Notations and Setup}  Consider a prediction problem from some input space $\mathcal{X}$ (e.g., images) to an output space $\mathcal{Y}$ (e.g., labels). We are given a weakly supervised training data set $D_{tr} = \{\x_i, \y_i\}_{i=1}^{n}$ where $(\x_i, \y_i) \in \mathcal{X} \times \mathcal{Y}$. Assuming that the true data distribution is $D = \{\x, \y^*\}$. Let $\theta_0$ be the model derived from baseline supervised learning (SL) methods without WSL, i.e., $\theta_0 = \text{SL}(\x, \y)$ and $\theta$ be the model derived from WSL methods, i.e., $\theta = \text{WSL}(\x, \y)$.
	
	\textbf{Maximize Performance Gain} The first branch of WSL aims to maximize gain, i.e., 
	\begin{eqnarray}
	&\max\limits_\theta \max\limits_{\y'}&  perf(\theta, \x, \y^*) \\ \nonumber
	&\mbox{s.t.}&\theta = \text{WSL}(\x, \y') \nonumber
	\end{eqnarray}
	where $perf(\theta, \x, \y^*)$ is the performance of model $\theta$ on true data distribution $\{\x, \y^*\}$ which is unknown. It is assumed that the higher the performance, the better. 
	
	These methods~\cite{chapelle2005semi,sun2007identifying,gamberger2000noise,koh2017understanding} make assumptions on the true label structure and aim to maximize the best-case performance where best-case means the true label $\y^*$ follows assumptions of these WSL methods. However, mismatched assumption occurs in real applications and will cause performance degeneration problem~\cite{zhou2017brief}.
	
	\textbf{Maintain Performance Safeness} The second branch of WSL aims to maintain safeness, i.e., 
	\begin{eqnarray}
	&\max\limits_{\theta}\min\limits_{\y'}& perf(\theta, \x, \y^*) - perf(\theta_0, \x, \y^*)\\ \nonumber
	&\mbox{s.t.}& \theta_0= \text{SL}(\x, \y)\\ \nonumber
	&&\theta = \text{WSL}(\x, \y') \nonumber
	\end{eqnarray}
	These methods~\cite{li2011towards,li2015towards,balsubramani2015optimally,guo2018general} maximize the worst-case performance gain to maintain safeness, i.e., considering the true label $\y^*$ in the worst-case. The algorithm performs better than supervised learning methods while causing conservative performance improvement.
	
	\section{The \algo\ Framework}
	
	\subsection{General Formulation}
	
	The proposed \algo\ framework optimizes label quality for weakly supervised data by taking both the best-case and worst-case performance into account. The \algo\ maximizes learning performance of the model trained on the optimized weakly supervised data, meanwhile, \algo\ requires that the worst-case performance of the model won't be worse than the model trained on the raw label. 
	Thus, the \algo\ framework can be generally formulated as:
	\begin{eqnarray}
	&\max\limits_{\theta, \y'} & perf(\theta, \x, \y^*)\\ \nonumber
	&\mbox{s.t.}  & \min\limits_{\y^*}\{perf(\theta,  \x, \y^*) - perf(\theta_0,  \x, \y^*)\}\} \geq 0 \\ \nonumber
	&&\theta = \text{WSL}(\x, \y')\\ \nonumber
	&&\theta_0 = \text{SL}(\x, \y) \nonumber
	\end{eqnarray}
	Figure~\ref{fig:framework} illustrates the \algo\ framework.
	
	\subsection{Determine missing or corrupted labels robustly}
	
	In reality, the true label $\y^*$ is unavailable. To overcome this issue, \algo\ introduces a small clean unbiased validation set $D_{val} = \{\tilde{\x}, \tilde{\y}\}$ with size $n_v$ to approximate the true data distribution. The effectiveness of similar idea has been recently demonstrated in hyper-parameter tuning~\cite{maclaurin2015gradient,ravi2017optimization,ren2018meta,franceschi2017forward,franceschi2018bilevel}, meta-learning~\cite{ren2018learning}, few-shot learning~\cite{ravi2017optimization}, and training set debugging~\cite{cadamuro2016debugging,zhang2018training} for the reason that it is usually acceptable to manually check a small amount of training data as the validation set and the performance on a clean unbiased validation set well approximates the generalization performance.
	
	We adopt bootstrap strategy to create multiple validation sets $D_1^{v}, \cdots, D_{m}^{v}$ to determine the missing or corrupted training set labels robustly. On one hand,  the \algo\ optimizes validation performance of the model trained on the optimized label to maximize performance gain. On the other hand, \algo\ optimizes the performance gain against the original label on all validation sets simultaneously to maintain safeness. 
	\begin{eqnarray}
	\!\!\!\!&\max\limits_{\theta,\y'} & \sum_{i=1}^{m} perf(\theta, D_i^{v}) \; \text{\emph{(maximize gain)}} \\ \nonumber
	\!\!\!\!&\mbox{s.t.} & perf(\theta, D_i^{v}) \geq perf(\theta_0, D_i^{v})\;\; \text{\emph{(maintain safeness)}}\\ \nonumber
	\!\!\!\!&& i = 1, \dots, m \\ \nonumber
	\!\!\!\!&&\theta = \text{WSL}(\x, \y') \\ \nonumber
	\!\!\!\!&&\theta_0 = \text{SL}(\x, \y) 
	\end{eqnarray}
	
	In this paper, the worst-case is defined as the worst performance gain on one validation set. However, it is worth mentioning that in real applications, there are multiple flexible ways to define the worst-case performance to maintain safeness. For example, in fine-grained classification, if we need the model performance can not degrade on some specific classes, 
	the validation sets can be composed of examples in these classes. For multi-view/multi-modal data, we can require that the model performance will not decrease on every view/modal. We can also optimize multiple performance measures simultaneously and require that the performance will not decrease on every performance measure.
	
	\textbf{Maximize Gain}
	\algo\ adopts the validation performance as an objective to optimize the training label. By optimizing validation performance, the \algo\ can leverage useful information from a large training set, and still converge to an appropriate distribution favored by the validation set. A clean unbiased validation set is a good approximation of the true data distribution, thus, the proposed \algo\ framework can maximize performance gain and improve the generalization performance.
	
	\textbf{Maintain Safeness}
	\algo\ explicitly requires that in the worst-case, the model trained on the optimized data will not be worse than the model trained on the original data, i.e., do not have the performance degeneration problem. In this paper, we adopt the bootstrap strategy to create multiple validation sets and define the worst-case indicates the worst performance gain in these validation sets. As mentioned previously, in real applications, it is flexible to define the worst-case performance. Therefore, the \algo\ framework can maintain safeness in various scenarios.
	
	\subsection{Handling instances weights and label correction}
	
	Optimizing the label quality includes two operations. First is to determine the harmful instances. A weight $w\in [0, 1]$ is assigned to each training instance where higher weight corresponds to higher quality. Then, the harmful instance can be relabeled or simply keep the low weight to decrease their influence. To relabel the instance, we need to estimate a label transition quantity $\Q$ for this instance.
	
	With these two operations, the empirical risk minimization procedure for two fundamental machine learning tasks, classification and regression, can be written as:
	\begin{eqnarray}
	&&\theta =  \arg \min_{\beta \in \Theta}  \frac{1}{n}\sum_{i=1}^{n}w_i\sum_{j=1}^{k}Q_{ij}\ell(\x_i, j, \beta) \\
	&&\theta =  \arg \min_{\beta \in \Theta}  \frac{1}{n}\sum_{i=1}^{n}w_i\ell(\x_i, y_i + Q_i, \beta) 
	\end{eqnarray}
	where $k$ is the number of classes, $\w\in [0, 1]^n$ is the weight, $\Q\in \{Q_{ij} \geq 0, \sum_{j=1}^{k} Q_{ij}=1, \forall i\}$ for classification task and $\Q \in \mathbb{R}^n$ for regression task is the label transition quantity. The model trained on the raw data can be recovered when $\w = \textbf{1}$, $\Q_{i} = \e_{y_i}, \forall i$ for classification task and $\w = \textbf{1}$, $\Q= \textbf{0}$  for regression task.
	
	In order to reduce the complexity of the search space and utilize the information of original training data, \algo\ constrains the distance between the optimized label $\y'$ and the original label $\y$. For classification, the way to measure the distance between $\y'$ and $\y$ is the value of $1-Q_{i, y_i}$, for regression, the way to measure the distance between $\y'$ and $\y$ is the value of $ \|\Q\|_2 $.
	
	In practice, the performance of a model can be replaced with the surrogate loss function. Specifically, for a $k$ classification problem, our objective as follows:
	\begin{eqnarray}\label{eq:obj}
	&\min\limits_{(\w, \Q)\in \Lambda, \theta}& \frac{1}{m}\sum_{i=1}^{m} \sum_{j\in D_v^{i}}\ell(\tilde{\x}_{j}, \tilde{\y}_{j}, \theta)\\ \nonumber 
	&&+ \lambda \max\limits_{i\in{1,\cdots,m}}\left[\sum_{j\in D_v^{i}}\ell(\tilde{\x}_{j}, \tilde{\y}_{j}, \theta) - c_i \right] 
	\end{eqnarray}
	\begin{eqnarray}
	&\mbox{s.t.}&  \theta = \arg\min_{\beta \in \Theta} \frac{1}{n}\sum_{i=1}^{n}w_i\sum_{j=1}^{k}Q_{ij}\ell(\x_i, j, \beta) \nonumber
	\end{eqnarray}
	where $c_i$ is the validation loss of the model trained on the original data, $\Lambda = \{\w\in[0,1]^n, \|\w\|_1 \geq \epsilon_1; 0 \leq Q_{ij} \leq 1, \sum_{j=1}^{k}Q_{ij}=1, \forall i, \frac{1}{n}\sum_{i=1}^{n}(1 - Q_{i, y_i}) \leq \epsilon_2\}$.
	
	For regression task, the model training procedure can be replaced with $\theta =  \arg \min_{\beta \in \Theta}  \frac{1}{n}\sum_{i=1}^{n}w_i\ell(\x_i, y_i + Q_i, \beta)$ and $\Lambda = \{\w\in[0,1]^n, \|\w\|_1 \geq \epsilon_1; \|\Q\|_2 \leq \epsilon_2\}$.
	
	\subsection{Gradient Method for Bi-Level Optimization}
	
	In this section, we discuss the optimization strategies for \algo. The optimization of regression problem is similar to the classification one, thus we only discuss the detail procedure for the classification task. 
	
	Eq.(\ref{eq:obj}) is a bi-level optimization problem~\cite{bard2013practical}, where one optimization problem is nested within another problem.  The lower-level optimization is to find an empirical risk minimizer model given the training set whereas the upper-level optimization is to minimize the validation loss given the learned model. For the writing simplicity, we denote the upper-level objective as $\mathcal{L}_{\text{val}}(\tilde{\x}, \tilde{\y},\theta)$ and the lower-level objective as $\mathcal{L}_{\text{train}}(\theta, \w, \Q)$.
	
	It is difficult to optimize the upper-level objective function directly because in general there is no closed-form expression of the optimal $\theta$. The classical approaches for solving bi-level optimization problem can be categorized as single-level reduction methods, descent methods, trust-region methods, and evolutionary methods~\cite{sinha2018review}. In this paper, we adopt two of the most popular methods for convex and non-convex situations.
	
	\textbf{Optimization for Convexity} 
	If the empirical loss function is differentiable and strictly convex, the lower-level optimization problem can be replaced with its Karush-Kuhn-Tucker (KKT) conditions~\cite{boyd2004convex}:
	\begin{equation}\label{eq:kkt}
	\frac{\partial \mathcal{L}_{\text{train}}}{\partial\theta} = 
	\frac{1}{n}\sum_{i=1}^{n}w_i\sum_{j=1}^{k}Q_{ij}\nabla_{\theta}\ell(\x_i, \y_i, \theta) = 0
	\end{equation}
	The solution to $\frac{\partial \mathcal{L}_{\text{train}}}{\partial\theta} = 0$ defines an implicit function $\theta(\w,\Q)$. Then we can adopt the implicit function theorem to estimate how $\theta$ varies in $\w$ and $\Q$:
	\begin{eqnarray}
	&&\frac{\partial \theta}{\partial \w} = -\left(\frac{\partial^2\mathcal{L}_{\text{train}}}{\partial \theta \partial \theta^{\top}}\right)^{-1}\frac{\partial^2\mathcal{L}_{\text{train}}}{\partial\theta \partial\w^{\top}} \\ 
	&&\frac{\partial \theta}{\partial \Q} = -\left(\frac{\partial^2\mathcal{L}_{\text{train}}}{\partial \theta \partial \theta^{\top}}\right)^{-1}\frac{\partial^2\mathcal{L}_{\text{train}}}{\partial\theta \partial\Q^{\top}}
	\end{eqnarray}
	This tells us how $\theta$ changes with respect to an infinitesimal change to $\w$ and $\Q$. Now, we can apply the chain rule to get the gradient of the whole optimization problem w.r.t. $\w$ and $\Q$,
	\begin{eqnarray}\label{eq:gradient-convex}
	&&\frac{\partial \mathcal{L}_{\text{val}}}{\partial \w} =\frac{\partial \mathcal{L}_{\text{val}}}{\partial\theta}\left( -\left(\frac{\partial^2\mathcal{L}_{\text{train}}}{\partial \theta \partial \theta^{\top}}\right)^{-1}\frac{\partial^2\mathcal{L}_{\text{train}}}{\partial\theta \partial\w^{\top}}\right) \\ 
	&&\frac{\partial \mathcal{L}_{\text{val}}}{\partial \Q} = \frac{\partial \mathcal{L}_{\text{val}}}{\partial\theta}\left( -\left(\frac{\partial^2\mathcal{L}_{\text{train}}}{\partial \theta \partial \theta^{\top}}\right)^{-1}\frac{\partial^2\mathcal{L}_{\text{train}}}{\partial\theta \partial\Q^{\top}}\right)
	\end{eqnarray}
	
	In practice, the matrix inverses is not pleasing, the alternative method to compute $-(\frac{\partial^2\mathcal{L}_{\text{train}}}{\partial \theta \partial \theta^{\top}})^{-1}\frac{\partial^2\mathcal{L}_{\text{train}}}{\partial\theta \partial\w^{\top}}$ is compute it as the solution to $(\frac{\partial^2\mathcal{L}_{\text{train}}}{\partial \theta \partial \theta^{\top}})\x = -\frac{\partial^2\mathcal{L}_{\text{train}}}{\partial\theta \partial\w^{\top}}$. When the Hessian is positive definite, the linear system can be solved conveniently and only requires matrix-vector products, that is we do not have to materialize  the Hessian.
	
	\textbf{Optimization for Non-Convexity} However, in general, we can not use the implicit theorem to obtain the optimal $\theta$, for the reason that setting the derivative to zero only leads to a saddle point for non-convex functions. In general cases, we adopt gradient descent methods (or one of its variants like momentum, RMSProp, Adam, etc.) to solve the optimal $\theta$ approximately.  
	Specifically, the training procedure can be written as:
	\begin{equation}
	\theta_t = \theta_{t-1} - \eta (\nabla_\theta \mathcal{L}_{\text{train}}(\theta_{t-1},\w, \Q))
	\end{equation}
	where $\eta$ is the learning rate.
	We replace the lower-level problem with the dynamical procedure, and place them in the objective in the Lagrangian form:
	\begin{eqnarray}
	&&\mathcal{L}(\theta, \w, \Q, \alpha) =\mathcal{L}_{\text{val}}(\tilde{\x}, \tilde{\y},\theta_T) \\ \nonumber
	&&+ \sum_{t=1}^{T}\alpha_t(\theta_{t-1} - \eta (\nabla_{\theta}\mathcal{L}_{\text{train}}(\theta_{t-1},\w,\Q)-\theta_t)\nonumber
	\end{eqnarray}
	where $\alpha_t$ is the Lagrange multipliers associated with the $t$-step of the dynamical process. The partial derivatives of the Lagrangian are given by:
	\begin{eqnarray}
	&\frac{\partial\mathcal{L}}{\partial\alpha_t}=& \theta_{t-1} - \eta (\nabla_{\theta}\mathcal{L}_{\text{train}}(\theta_{t-1},\w,\Q))-\theta_t \\
	&\frac{\partial\mathcal{L}}{\partial \theta_t}=& \alpha_{t+1}A_{t+1} - \alpha_t, \;\;\; t\in{1, \cdots, T-1}\\
	&\frac{\partial\mathcal{L}}{\partial \theta_T}=&\nabla_{\theta}\mathcal{L}_{\text{val}}(\tilde{\x}, \tilde{\y},\theta_T) - \alpha_T\\
	&\frac{\partial\mathcal{L}}{\partial \w} =&\sum\nolimits_{t=1}^{T}\alpha_tB_t \\&\frac{\partial\mathcal{L}}{\partial \Q} =&\sum\nolimits_{t=1}^{T}\alpha_tC_t
	\end{eqnarray}
	where
	\begin{eqnarray}
	&A_t = &1 - \eta \left[\nabla_{\theta}\mathcal{L}_{\text{train}}(\theta_{t-1},\w,\Q)\right] \\ 
	&B_t = &-\eta\nabla_\w(\nabla_{\theta}\mathcal{L}_{\text{train}}(\theta_{t-1},\w,\Q))\\ 
	&C_t = &-\eta\nabla_\Q(\nabla_{\theta}\mathcal{L}_{\text{train}}(\theta_{t-1},\w,\Q))
	\end{eqnarray}
	By setting each derivative to zero we can obtain that,
	\begin{eqnarray}
	&\frac{\partial \mathcal{L}}{\partial \w} =& \nabla_{\theta} \mathcal{L}_{\text{val}}(\tilde{\x}, \tilde{\y}, \theta_T)\sum_{t=1}^{T}(\prod_{s=t+1}^{T}A_s)B_t\\
	&\frac{\partial \mathcal{L}}{\partial \Q} =& \nabla_{\theta} \mathcal{L}_{\text{val}}(\tilde{\x}, \tilde{\y}, \theta_T)\sum_{t=1}^{T}(\prod_{s=t+1}^{T}A_s)C_t 
	\end{eqnarray}
	Then, the whole problem can be solved with gradient methods. Algorithm~\ref{algo:convex} and~\ref{algo:non-convex} summarize the pseudo code of these two optimization strategies.
	
	\begin{algorithm}[!t]
		\caption{Optimization Algorithm for Convexity.}\label{algo:convex}
		\textbf{Input}: Training set $D_{tr} = \{\x_i, \y_i\}_{i=1}^{n}$, validation set  $D_{val} = \{\tilde{\x}_i, \tilde{\y}_i\}_{i=1}^{n^v}$, current values of $\w_0$ and $\Q_0$.
		
		\textbf{Output}: Learned weights $\w$ and label transition quantity $\Q$.
		\begin{algorithmic}[1]
			\FOR{$t = 1$ to $T$}
			\STATE Compute the gradient $\frac{\partial  \mathcal{L}_{\text{val}}}{\partial \theta}, \frac{\partial  \mathcal{L}_{\text{train}}}{\partial \theta}, \frac{\partial  \mathcal{L}_{\text{train}}}{\partial \w},\frac{\partial  \mathcal{L}_{\text{train}}}{\partial \Q}$.
			\STATE Compute the gradient $\frac{\partial  \mathcal{L}_{\text{val}}}{\partial \w},\frac{\partial  \mathcal{L}_{\text{val}}}{\partial \Q}$ using Eq.(\ref{eq:gradient-convex}).
			\STATE $\w_t = $ Optimization step($\w_{t-1}, \frac{\partial  \mathcal{L}_{\text{val}}}{\partial \w}$).
			\STATE $\Q_t = $ Optimization step($\Q_{t-1}, \frac{\partial  \mathcal{L}_{\text{val}}}{\partial \Q}$).
			\STATE $\w_t, \Q_t = \text{Projection step}(\w_t, \Q_t, \Lambda)$
			\ENDFOR
		\end{algorithmic}
	\end{algorithm}
	
	\begin{algorithm}[!t]
		\caption{Optimization Algorithm for Non-Convexity.}\label{algo:non-convex}
		\textbf{Input}: Training set $D_{tr} = \{\x_i, \y_i\}_{i=1}^{n}$, validation set  $D_{val} = \{\tilde{\x}_i, \tilde{\y}_i\}_{i=1}^{n^v}$, current values of $\w_0$ and $\Q_0$.
		
		\textbf{Output}: Learned weights $\w$ and label transition quantity $\Q$.
		\begin{algorithmic}[1]
			\FOR {$l = 1$ to $L$}
			\FOR{$t = 1$ to $T$}
			\STATE $\theta_t = \theta_{t-1} - \eta (\nabla_\theta \mathcal{L}_{\text{train}}(\theta_{t-1},\w_{l-1},\Q_{l-1}))$
			\ENDFOR
			\STATE $\alpha_T = \nabla_{\theta} \mathcal{L}_{\text{val}}(\tilde{\x}, \tilde{\y}, \theta_T)$  
			\STATE $g_{\w} = 0, g_{\Q} = 0$
			\FOR{$t = T-1$ to $1$}
			\STATE $g_\w = g_\w + \alpha_{t+1}B_{t+1}$
			\STATE $g_\Q = g_\Q+ \alpha_{t+1}C_{t+1}$ 
			\STATE $\alpha_t = \alpha_{t+1} A_{t+1}$
			\ENDFOR
			\STATE $\w_l = \text{Optimization step} (\w_{l-1}, g_\w)$
			\STATE $\Q_l = \text{Optimization step} (\Q_{l-1}, g_\Q)$
			\STATE $\w_l, \Q_l = \text{Projection step}(\w_l, \Q_l, \Lambda)$
			\ENDFOR
		\end{algorithmic}
	\end{algorithm}
	
	\textbf{Convergence}
	For convexity situations, the lower-level optimization problem is replaced with its KKT conditions and the overall bi-level optimization problem is reduced to a single-level optimization problem. Therefore, the optimization problem enjoys the same convergence properties as the gradient methods for single-level optimization.
	
	For non-convexity situations, we adopt an approximation procedure with respect to the original bi-level problem. It is necessary to analyze the convergence of this algorithm, where the proof is shown in the appendix.

	\begin{thm}\label{thm:convergence}
		(Convergence). Suppose the empirical loss function $\ell(\cdot, \cdot)$ is Lipschitz continuous. Let $\theta_{opt}$ be the optimal solution to the lower-level optimization problem, then as $T\to \infty$, we have $\arg\min_{(\w, \Q)} \mathcal{L_{\text{val}}}(\theta_T, \w, \Q) \to \arg\min_{(\w, \Q)}\mathcal{L_{\text{val}}}(\theta_{opt}, \w, \Q)$. 
	\end{thm}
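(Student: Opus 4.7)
The plan is to decompose the statement into three modular pieces: (i) the inner iterate $\theta_T(\w,\Q)$ approaches $\theta_{opt}(\w,\Q)$ uniformly over $(\w,\Q)\in\Lambda$; (ii) this transfers into convergence of the upper-level objective $\mathcal{L}_{\text{val}}(\theta_T,\w,\Q)\to\mathcal{L}_{\text{val}}(\theta_{opt},\w,\Q)$; (iii) convergence of the upper-level objective is lifted to convergence of its argmin over the compact feasible set $\Lambda$.

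First I would analyse the inner solver. For each fixed $(\w,\Q)$ the iteration $\theta_t=\theta_{t-1}-\eta\nabla_\theta\mathcal{L}_{\text{train}}(\theta_{t-1},\w,\Q)$ is plain gradient descent on $\mathcal{L}_{\text{train}}(\cdot,\w,\Q)$. Because $\ell$ is Lipschitz, $\mathcal{L}_{\text{train}}$ inherits Lipschitz continuity (and, under the standard smoothness hypotheses that justify the gradient updates, Lipschitz-smoothness) in $\theta$, and because $\Lambda$ is bounded, with $\w\in[0,1]^n$ and $\Q$ constrained to a product of simplices or a ball, the constants can be chosen uniformly in $(\w,\Q)$. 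Classical gradient-descent analysis then yields a bound of the form $\|\theta_T(\w,\Q)-\theta_{opt}(\w,\Q)\|\le\varepsilon_T$ with $\varepsilon_T\to 0$ and $\varepsilon_T$ independent of $(\w,\Q)$; in the strongly convex regime $\varepsilon_T$ decays geometrically, while in the smooth non-convex regime one only gets $\|\nabla_\theta\mathcal{L}_{\text{train}}(\theta_T,\w,\Q)\|\to 0$, with $\theta_{opt}$ interpreted as the stationary point that GD converges to from the shared initializer.

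Next I would transfer this to the upper level. Lipschitz continuity of $\ell$ makes $\mathcal{L}_{\text{val}}(\cdot,\w,\Q)$ Lipschitz in $\theta$ with some constant $L_v$ independent of $(\w,\Q)$, so
\begin{equation*}
\bigl|\mathcal{L}_{\text{val}}(\theta_T,\w,\Q)-\mathcal{L}_{\text{val}}(\theta_{opt},\w,\Q)\bigr|\le L_v\,\varepsilon_T
\end{equation*}
uniformly on $\Lambda$. Hence the $T$-th upper-level objective converges uniformly to its limit, and the limit inherits continuity from the continuity of $\mathcal{L}_{\text{val}}$ in $\theta$ together with continuous dependence of $\theta_{opt}$ on $(\w,\Q)$. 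The conclusion of the theorem then follows from a standard stability-of-minimizers argument: uniform convergence of continuous functions on a compact set implies Kuratowski/Berge convergence of their argmin sets, i.e.\ every cluster point of approximate minimizers $(\w_T,\Q_T)$ is a minimizer of the limit objective, which is precisely what is asserted.

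The hard part will be step (i) in the non-convex regime. Without additional structure, gradient descent only reaches a stationary point, so $\|\theta_T-\theta_{opt}\|\to 0$ must be interpreted locally within the basin selected by the initializer, and a fully global statement typically needs an auxiliary regularity condition such as a Polyak--Łojasiewicz inequality. A secondary delicacy is keeping the rate $\varepsilon_T$ uniform in $(\w,\Q)$: this is what forces compactness of $\Lambda$ and the uniform Lipschitz constants inherited from the hypothesis on $\ell$ to play an essential role.
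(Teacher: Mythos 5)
Your decomposition is essentially the route the paper itself takes: Lipschitz continuity of $\ell$ transfers $\|\theta_T-\theta_{opt}\|\to 0$ into convergence of $\mathcal{L}_{\text{val}}(\theta_T,\cdot,\cdot)$, a $\sup_{\w,\Q}$ bound upgrades this to uniform convergence, and a stability-of-minimizers argument over the feasible set finishes the claim. The substantive difference is your step (i). The paper's proof simply declares that ``with $T\to\infty$ it is obvious that $\mathcal{L}_{\text{val}}(\theta_T,\w,\Q)\to\mathcal{L}_{\text{val}}(\theta_{opt},\w,\Q)$'', i.e.\ it takes the convergence of the inner gradient-descent iterate to the lower-level optimum entirely for granted, whereas you correctly isolate this as the crux and note that it fails in general for a non-convex $\mathcal{L}_{\text{train}}$ unless one adds strong convexity, a Polyak--{\L}ojasiewicz condition, or reinterprets $\theta_{opt}$ as the stationary point in the basin selected by the initializer. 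You also make explicit the uniformity in $(\w,\Q)$ and the compactness of $\Lambda$ on which the paper's $\sup$ bound silently relies. So your plan is, if anything, a more careful version of the published argument; the caveat you raise about the non-convex regime is a genuine gap in the theorem as the paper proves and applies it (this algorithm is explicitly the one used for non-convex losses), not a defect of your approach.
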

	We stress that assumptions are very natural and satisfied by many loss functions of practical interests. For example, for classification, logistic loss is a Lipschitz smooth function. Similar cases can also be applied to square loss in regression.  
	
	\begin{table*}[!t] 
		\centering
		\caption{Average accuracy in terms of label noise learning on MNIST dataset under $50\%$ noisy ratio. The entries are bolded if the method achieves the best performance. The entries are boxed once the WSL method performs even worse than the baseline method.}
		\label{tbl:acc-mnist}
		\begin{tabular}{ccccccc}
			\hline
			Method & Baseline & Validation Only & REED &S-Model & SafeW & \algo\ \\
			\hline
			Unbiased validation set & 77.4 $\pm$ 0.45 & 76.2 $\pm$ 0.13 &78.6 $\pm$  0.21 &\fbox{76.2 $\pm$ 0.45} & 77.9 $\pm$ 0.34 & \textbf{83.4 $\pm$ 0.19}  \\
			Biased validation set &76.7 $\pm$ 0.47 & 69.1 $\pm$ 0.20 & 78.2 $\pm$ 0.30 &\fbox{75.7 $\pm$ 0.41} & 76.4 $\pm$ 0.42 & \textbf{80.3 $\pm$ 0.39} \\
			\hline
		\end{tabular}
	\end{table*}
	\begin{table*}[!t] 
		\centering
		\caption{Average accuracy in terms of semi-supervised learning on MNIST dataset with $40\%$ labeled data.}
		\label{tbl:ssl-mnist}
		\begin{tabular}{ccccccc}
			\hline
			Method & Baseline & $\Pi$-Model & Mean Teacher  &Pseudo-Labeling & SafeW & \algo\ \\
			\hline
			Unbiased validation set &  80.3 $\pm$ 0.43 & 83.6 $\pm$  0.36 &84.9 $\pm$ 0.40 & 82.5 $\pm$ 0.66 &82.2 $\pm$ 0.28 & \textbf{86.5 $\pm$ 0.33}  \\
			Biased validation set &79.8 $\pm$ 0.41 & 82.8 $\pm$ 0.49 &84.1 $\pm$ 0.41 & 82.0 $\pm$ 0.53 & 81.6 $\pm$ 0.36 &\textbf{84.1 $\pm$ 0.29} \\
			\hline
		\end{tabular}
	\end{table*}
	
	\section{Experiments}
	\label{Experiments}
	In order to validate the effectiveness of the proposed method, extensive experiments are conducted on a broad range of data sets that cover diverse domains including standard MNIST, CIFAR benchmarks for image classification, and six UCI datasets for regression tasks. Both unbiased and biased validation set are considered, and two WSL tasks, label noise learning and semi-supervised learning settings are conducted for comparison.  
	
	\subsection{Experimental Setup}
	For label noise learning, \algo\ is compared with the following methods.
	\textbf{REED}~\cite{DBLP:journals/corr/ReedLASER14}: a bootstrapping technique where the training target is a convex combination of the model prediction and the label. \textbf{S-MODEL}~\cite{goldberger2016training}: it adds a fully connected softmax layer after the regular classification output layer to model the noise transition matrix. \textbf{SafeW}~\cite{guo2018general}: it uses ensemble strategy to generate a prediction that can maintain safeness of weakly supervised learning. In addition, we compare with two simple baselines. \textbf{Baseline}: it combines the noisy data and validation data as the training set to train a model. \textbf{Validation only}: which only use the validation data as the training set to train a model.
	
	For semi-supervised learning, \algo\ is compared with the following methods.
	\textbf{$\Pi$-Model}~\cite{laine2016temporal}: it adds a loss term which encourages the distance between network's output for different passes of unlabeled data through the network to be small. \textbf{Mean Teacher}~\cite{tarvainen2017mean}: it is a stable version of $\Pi$-Model, which sets the target to predictions made using an exponential moving average of parameters from previous training steps. \textbf{Pseudo-Labeling}~\cite{lee2013pseudo}: it produces pseudo-labels for unlabeled data using the prediction function itself over the course of training.  In addition, we compare with \textbf{SafeW} and the baseline method \textbf{Baseline}.
	
	For the REED method, we use the best parameter reposted in~\cite{DBLP:journals/corr/ReedLASER14}. For the S-MODEL method, the transition weight is set to a smoothed identity matrix. The baseline method, REED, and S-Model are adopted as the base learners in SafeW. For \algo,
	a two-layer neural network is employed as the model. Gradient descent and ADAM are used to optimize the lower-level and the upper-level objective respectively. The iteration of lower-level and upper-level optimization are set to 500 and 20 for all experiments. 
	
	For the $\Pi$-Model, Mean Teacher, and Pseudo-Labeling method, we adopt the model structure and optimization method with \algo.
	The $\Pi$-Model, Mean Teacher, and Baseline method are adopted as the base learners in SafeW. For \algo, the weight $\w$ and label transition quantity $\Q$ for labeled data are known and we only optimize the value for unlabeled data.
	
	To make sure that our method does not have the privilege of training on more data, all compared methods combine the validation data with raw training data as a new training set.
	
	\subsection{MNIST Handwritten Digit Recognition Task}
	
	MNIST~\cite{lecun1998gradient}  is a standard dataset for handwritten digit classification. We select a total of $10,000$ images of size $28 \times 28$ and split into four subsets: the training set with 7,000 training examples, the validation set with 1,000 examples, the hyper-validation set with 1,000 examples to monitor training progress and tuning hyper-parameters, and the test data with $1,000$ examples. We also investigate the impact of the biased validation data by subsample a class imbalanced validation set. Specifically, the ratio between images belong to class 0-4 and 5-9 is shifted from 1:1 to 1:3 in the biased validation set, and 3 validation sets are generated with bootstrap strategy. 
	
	\textbf{Label Noise Learning Results} For label noise learning, we add uniform flip noise to the training set, means that all label classes can uniformly flip to any other label classes, which has been mostly studied in  literature~\cite{frenay2014classification}. The summary of classification accuracy with noisy ratio $50\%$ is reported in Table~\ref{tbl:acc-mnist}. All the results are averaged from 5 runs with different random splits of datasets. 
	
	From the comparison results in Table~\ref{tbl:acc-mnist}, our method achieves the best performance on both biased and unbiased validation sets among all the methods. Methods that maximize performance only, such as S-Model, could suffer performance degradation. The overall performance improvement of safeness-only methods (such as SafeW) is limited. Our approach is not inferior to the baseline approach and achieves maximum performance improvement. It is not effective to use validation sets only, because the amount of data in validation sets is small and it is difficult to train a good model. Above results demonstrate that \algo\ is not equivalent to the baseline methods which simply train a model on the validation and training data. In contrast, \algo\  utilizes the validation data to improve the label quality of training set and thus improve the final performance. 
	
	\textbf{Semi-Supervised Learning Results} For semi-supervised learning,  $40\%$ of the training data is labeled while the rest are unlabeled. Similar to label noise learning, the summary of classification accuracy is reported in Table~\ref{tbl:ssl-mnist} and results are averaged from 5 runs with different random splits. 
	
	Results in Table~\ref{tbl:ssl-mnist} show that \algo\ also achieves good performance on semi-supervised learning. The key difference between semi-supervised learning and label noise learning is that we know which instance has a high-quality label in semi-supervised learning, thus, \algo\ achieves better performance with less trusted labeled training instances.
	
	\subsection{CIFAR-10 Image Classification Task}
	
	CIFAR-10~\cite{lecun1998gradient} is benchmark for image classification task. The dataset consists of natural images with a size of $32 \times 32$ pixels and has 10 categories. We also subsample a set of $10,000$ images and split into four subsets: the training set with 7,000 training examples, the validation set with 1,000 examples, the hyper-validation set with 1,000 examples, and the test data with $1,000$ examples. To construct a biased validation set, the ratio between the top five classes and the bottom five classes is shifted to 1:3, and we create 3 validation sets using bootstrap strategy.
	
	\textbf{Label Noise Learning Results} The summary of classification accuracy under $50\%$ noisy ratio for label noise learning with uniform flip noise is reported in Table~\ref{tbl:acc-cifar}. All the results are averaged from 5 runs with different random splits of datasets.  Results on Table~\ref{tbl:acc-cifar} show that \algo\ also obtains maximum performance improvement on label noise learning and do not have performance degradation problem.
	
	\begin{table*}[!t] 
		\centering
		\caption{Average accuracy in terms of label noise learning on CIFAR-10 dataset under 50\% noisy ratio.}
		\label{tbl:acc-cifar}
		\begin{tabular}{ccccccc}
			\hline
			Method & Baseline & Validation Only & REED &S-Model & SafeW & \algo\ \\
			\hline
			Unbiased validation set & 58.3 $\pm$ 0.43 & 49.5 $\pm$ 0.24 & 64.5 $\pm$ 0.39 & 63.3 $\pm$ 0.87 & 62.0 $\pm$ 0.42 & \textbf{66.3 $\pm$ 0.13}  \\
			Biased validation set & 56.8 $\pm$ 0.47 & 42.3 $\pm$ 0.20 & 64.3 $\pm$ 0.41 & 62.9 $\pm$ 0.67 & 61.5 $\pm$ 0.44 &\textbf{64.5 $\pm$ 0.33}\\
			\hline
		\end{tabular}
	\end{table*}
	
	\begin{table*}[!t] 
		\centering
		\caption{Average accuracy in terms of semi-supervised learning on CIFAR-10 dataset with $40\%$ labeled data.}
		\label{tbl:ssl-cifar}
		\begin{tabular}{ccccccc}
			\hline
			Method & Baseline & $\Pi$-Model & Mean Teacher  &Pseudo-Labeling & SafeW & \algo\ \\
			\hline
			Unbiased validation set & 60.7 $\pm$ 0.38 &64.4 $\pm$ 0.53 & 65.7 $\pm$ 0.28 & 62.1 $\pm$ 0.50 & 63.5 $\pm$ 0.39 & \textbf{68.8 $\pm$ 0.33}  \\
			Biased validation set & 59.0 $\pm$ 0.38 & 63.6 $\pm$ 0.49 &64.9 $\pm$ 0.25 & 61.8 $\pm$ 0.43 & 62.5 $\pm$ 0.41 &\textbf{65.7 $\pm$ 0.40}\\
			\hline
		\end{tabular}
	\end{table*}

	\textbf{Semi-Supervised Learning Results} The results of classification accuracy with $40\%$ labeled data for semi-supervised learning is reported in Table~\ref{tbl:ssl-cifar}. All the results are averaged from 5 runs with different random splits of datasets.  Results on Table~\ref{tbl:ssl-cifar} show that \algo\ also derive highly competitive performance with all compared methods on semi-supervised learning.
	
	\begin{table}[!t]\vspace{-4mm}
		\centering
		\caption{Mean Squared Error of regression task on six UCI datasets}
		\label{tbl:reg}
		\rowcolors{2}{white}{gray!10}
		\begin{tabular}{cccc}
			\hline\hline
			\multicolumn{4}{c}{Label Noise Learning} \\
			\hline
			Dataset & Baseline & SVR & \algo  \\
			\hline
			abalone & .017 $\pm$ .004 &\fbox{.143 $\pm$ .033}&\textbf{.010 $\pm$ .002}\\
			bodyfat & .150 $\pm$ .016 &.102 $\pm$ .013&\textbf{.087 $\pm$ .010} \\
			cpusmall & .005 $\pm$ .001&.003 $\pm$ .001&\textbf{.002 $\pm$ .001}\\
			mg & .033 $\pm$ .004 &.030 $\pm$ .005&\textbf{.026 $\pm$ .004} \\
			mpg & .081 $\pm$ .011 &.074 $\pm$ .009&\textbf{.021 $\pm$ .005}\\
			space\_ga & .100 $\pm$ .014 &.093 $\pm$ .010&\textbf{.021 $\pm$ .006}\\
			\hline\hline
			\multicolumn{4}{c}{Semi-Supervised Learning} \\
			\hline
			Dataset & Baseline & Self-LS & \algo \\
			\hline
			abalone & .015 $\pm$ .003 & .012 $\pm$ .020&\textbf{.011 $\pm$ .002}\\
			bodyfat & .089 $\pm$ .023 &.062 $\pm$ .019&\textbf{.053 $\pm$ .011} \\
			cpusmall & .006 $\pm$ .001&.003 $\pm$ .001&\textbf{.002 $\pm$ .001}\\
			mg & .039 $\pm$ .004 &.031 $\pm$ .006&\textbf{.025 $\pm$ .008} \\
			mpg & .084 $\pm$ .013 &.040 $\pm$ .011&\textbf{.029 $\pm$ .003}\\
			space\_ga & .060 $\pm$ .009 &.031 $\pm$ .010&\textbf{.019 $\pm$ .004}\\
			\hline\hline
		\end{tabular}\vspace{-4mm}
	\end{table} 
	
	\subsection{Datasets for Regression Task}
	
	We also do experiments on six regression datasets to verifies the effectiveness of our proposal on regression tasks. The datasets cover diverse domains including physical measurements (\emph{abalone}), health (\emph{bodyfat}), economics (\emph{cadata}), activity recognition (\emph{mpg}), etc. For each dataset, we split it into four parts: training set, validation set, hyper-validation set, test set, according to the ratio $7 : 1 : 1 : 1$. We normalize all features and labels into $[0, 1]$. For label noise learning, we add a gauss noise to the training set, and for semi-supervised learning, we select 40\% of training set as the labeled data. 
	
	Since there are rarely works focusing on weakly supervised regression tasks, for label noise learning, we only compared with the baseline method which uses the training set and validation set simultaneously and a noisy robust regression method, Support Vector Regression (SVR). For semi-supervised learning, we compared with the baseline method and the Self-LS method which is an extension of supervised least square method based on self-training. For \algo, we use linear regression as the model, i.e., $\theta = \arg\min \| \y - \theta^{\top}\x \|^2 _2$.
	
	\textbf{Label Noise Learning Results} We report the Mean Squared Error for these UCI datasets in Table~\ref{tbl:reg}. Each result is averaged of 10 runs under 50\% noisy ratio. From Table~\ref{tbl:reg}, we can see that our proposal achieves the largest improvement, even when SVR suffers the performance degeneration problem. These demonstrate the effectiveness of \algo\ for label noise regression task.
	
	\textbf{Semi-Supervised Learning Results} The results of semi-supervised learning are reported in Table~\ref{tbl:reg}. Each result is averaged of 10 runs with 40\% labeled data. From Table~\ref{tbl:reg}, we can realize that our proposal consistently achieves highly competitive performance on semi-supervised regression.
	
	\subsection{Label Quality Improvement}
	
	To further test the effectiveness of label quality improvement,
	we compare \algo\ with two methods: Influence function~\cite{koh2017understanding}, which corrects label via perturbing a training point and counting the changes to prediction; Nearest Neighbor, which recommends the label from closest validation data when asked for a suggested label correction. These two methods are two commonly used as baselines for mislabel correction~\cite{zhang2018training}. 
	
	From Table~\ref{tbl:effect_of_detect}, we can see that \algo\ dominates the compared methods, which further demonstrates the effectiveness of our proposal in label quality improvement.
	
	\begin{table}[!h]\vspace{-4mm}
		\centering
		\caption{F1-Score of mislabeled data correction on MNIST dataset with noisy ratio varies from 10\% to 60\%.}
		\label{tbl:effect_of_detect}
		\begin{tabular}{cccc}
			\hline\hline
			Noisy  Ratio&  \tabincell{c}{Nearest\\ Neighbor} & \tabincell{c}{Influence \\ Function} & \algo \\
			\hline
			10\% & 54.17 &  65.71 & \textbf{68.21}\\
			20\% & 56.56 & 65.79 &\textbf{68.06} \\
			30\% & 58.32 & 65.21 &\textbf{67.64} \\
			40\% & 60.16 & 66.81 &\textbf{68.33} \\
			50\% & 61.15 & 65.05 &\textbf{67.97} \\
			60\% & 60.15 & 64.81 &\textbf{66.91} \\
			
			\hline\hline
		\end{tabular}\vspace{-4mm}
	\end{table}
	
\subsection{Validation Set Size}
It is meaningful to study the impact of validation set size. We investigate how the classification accuracy varies with the size of validation set increased from 100 to 1000 on MNIST dataset. We design experiments of \algo\ on both label noise learning with 50\% noisy ratio and semi-supervised learning with 40\% labeled data. The accuracy improvement against baseline method is plotted in Figure~\ref{fig:validation-size}. 
	
From Figure~\ref{fig:validation-size}, we can see that, as long as the size of validation set reaches a certain small scale (such as 100 samples), the proposal already achieves quite good performance improvement. Moreover, our proposal has great potential. Although we have achieved good results, this experiment shows that by increasing the size of validation set, we can continue to improve the performance significantly.

	\begin{figure}[!h]
		\centering
		\includegraphics[width=0.5\textwidth]{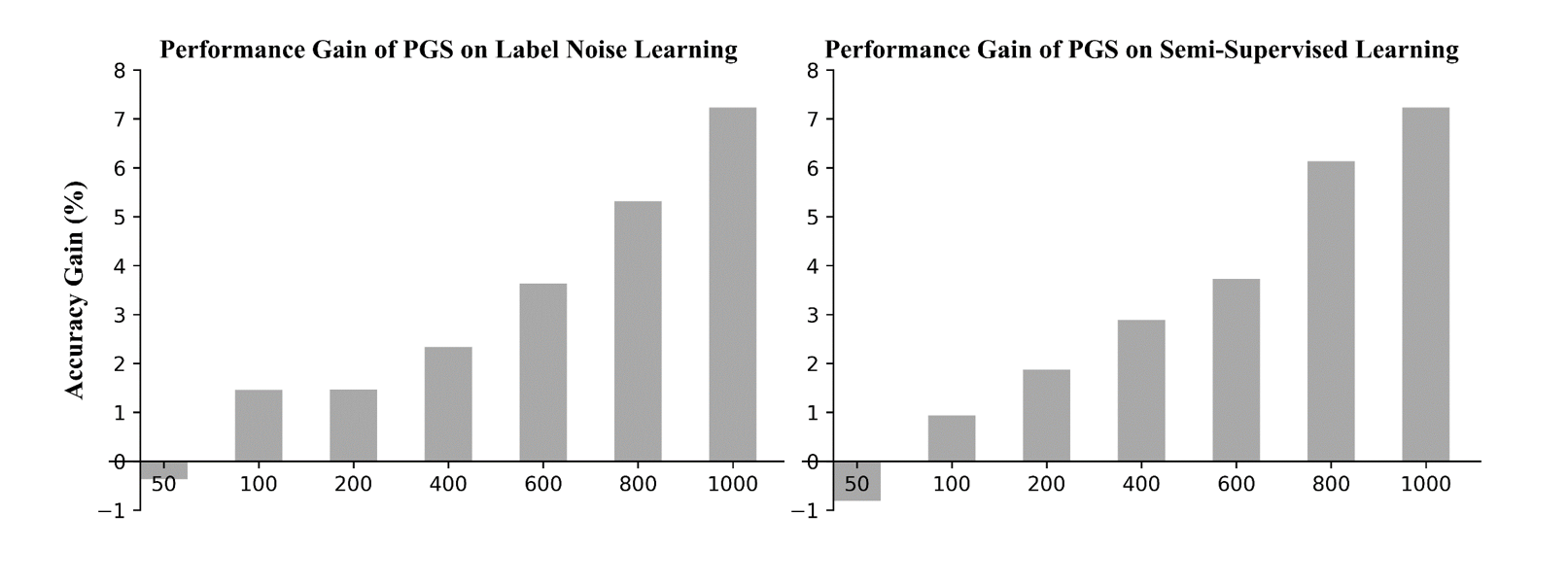}  
		\caption{Performance gain of \algo\ with varied validation set sizes on label noise learning and semi-supervised learning on MNIST.}
		\label{fig:validation-size}
	\end{figure}
	
\section{Conclusion}
	\label{Discussionss}
	
	This paper presents a new WSL framework. Compared with previous works, it considers both 1) performance maximization and 2) performance safeness. These two requirements are usually indispensable in many practical tasks. We use small-scale validation set to guide label quality optimization to get an effective solution, and the effectiveness is clearly verified by extensive experimental results. We believe that the new framework opens a door for reliable weakly supervised learning. Many follow-up works, such as integration of the adversarial mechanism, can be further studied.
	
	\bibliographystyle{icml2019}
	\bibliography{reference.bib}
	
	\newpage
	\twocolumn[
	\icmltitle{Supplementary}
	]
	
		\section*{A. Proof of Theorem 1}
	\begin{thm}\label{thm:convergence}
		(Convergence). Suppose the empirical loss function $\ell(\cdot, \cdot)$ is Lipschitz continuous. Let $\theta_{opt}$ be the optimal solution to the lower-level optimization problem, then as $T\to \infty$, we have $\arg\min_{(\w, \Q)} \mathcal{L_{\text{val}}}(\theta_T, \w, \Q) \to \arg\min_{(\w, \Q)}\mathcal{L_{\text{val}}}(\theta_{opt}, \w, \Q)$. 
	\end{thm}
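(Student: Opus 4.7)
The plan is to split the argument into two stages: first show that the inner gradient-descent iterate $\theta_T(\w,\Q)$ converges to the true lower-level optimizer $\theta_{opt}(\w,\Q)$ as $T\to\infty$, uniformly over the feasible region $\Lambda$, and then use Lipschitz continuity of the validation loss to transfer this inner convergence into convergence of the outer argmin.

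For the first stage, I would fix $(\w,\Q)\in\Lambda$ and analyze the dynamics $\theta_t=\theta_{t-1}-\eta\,\nabla_\theta \mathcal{L}_{\text{train}}(\theta_{t-1},\w,\Q)$ that defines $\theta_T$. Since $\ell$ is Lipschitz continuous and enters $\mathcal{L}_{\text{train}}$ through weights $w_iQ_{ij}/n$ that are bounded uniformly on $\Lambda$, the gradient map $\nabla_\theta\mathcal{L}_{\text{train}}(\cdot,\w,\Q)$ is well-behaved and its stationary set coincides with $\theta_{opt}(\w,\Q)$ by definition. Invoking standard gradient-descent convergence results (for smooth objectives with an appropriate fixed step size, or with Robbins-Monro step sizes in the non-smooth Lipschitz case) then yields pointwise convergence $\theta_T(\w,\Q)\to\theta_{opt}(\w,\Q)$. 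I would upgrade this to the uniform statement
\[
\sup_{(\w,\Q)\in\Lambda}\bigl\|\theta_T(\w,\Q)-\theta_{opt}(\w,\Q)\bigr\|\longrightarrow 0
\]
by exploiting the compactness of $\Lambda$ together with the continuous dependence of the gradient map on $(\w,\Q)$.

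For the second stage, Lipschitz continuity of $\ell$ implies $\mathcal{L}_{\text{val}}(\tilde{\x},\tilde{\y},\cdot)$ is Lipschitz in $\theta$ with some constant $L_v$, so that
\[
\bigl|\mathcal{L}_{\text{val}}(\theta_T,\w,\Q)-\mathcal{L}_{\text{val}}(\theta_{opt},\w,\Q)\bigr|\le L_v\,\bigl\|\theta_T(\w,\Q)-\theta_{opt}(\w,\Q)\bigr\|.
\]
Combining with the uniform bound above yields uniform convergence of $\mathcal{L}_{\text{val}}(\theta_T,\cdot,\cdot)$ to $\mathcal{L}_{\text{val}}(\theta_{opt},\cdot,\cdot)$ on the compact set $\Lambda$. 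Applying the classical argmin-stability theorem for uniformly convergent continuous functions on a compact feasible set then delivers the claimed convergence, read in the sense that every accumulation point of $\arg\min_{(\w,\Q)}\mathcal{L}_{\text{val}}(\theta_T,\w,\Q)$ is a minimizer of $\mathcal{L}_{\text{val}}(\theta_{opt},\cdot,\cdot)$.

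The hardest part will be making the uniformity in the first stage genuinely rigorous: classical gradient descent rates depend on smoothness and curvature constants of the lower-level problem, and here those constants depend on $(\w,\Q)$ through the weights $w_iQ_{ij}$. Near degenerate corners of $\Lambda$ where some weights vanish, the effective curvature constant could collapse. I would handle this by leveraging the constraint $\|\w\|_1\ge\epsilon_1$ together with the row-stochastic structure of $\Q$ to keep these constants bounded away from zero, or, failing that, by a continuity-plus-compactness argument showing the curvature constants vary continuously in $(\w,\Q)$ and hence attain positive infima on $\Lambda$. A secondary subtlety is that argmin convergence without additional strict convexity of the limit problem must be interpreted set-valuedly; I would state the conclusion in this variational sense unless extra structure of $\mathcal{L}_{\text{val}}(\theta_{opt},\cdot,\cdot)$ in $(\w,\Q)$ is assumed.
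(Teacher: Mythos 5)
Your two-stage plan --- first establish $\theta_T(\w,\Q)\to\theta_{opt}(\w,\Q)$ uniformly over $\Lambda$, then use Lipschitz continuity to transfer this to uniform convergence of $\mathcal{L}_{\text{val}}(\theta_T,\cdot,\cdot)$ and invoke argmin stability --- is exactly the route the paper takes, except that the paper simply declares the inner convergence ``obvious'' and states the conclusion as equality of argmins rather than in the set-valued sense you correctly insist on. The difficulty you flag in your last paragraph (gradient descent on a merely Lipschitz, possibly non-convex lower-level objective need not reach the global minimizer $\theta_{opt}$, and any rate constants degrade near the corners of $\Lambda$) is a genuine gap that the paper's own proof silently shares, so your version is if anything the more honest account of what is actually being assumed.
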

	
	\begin{proof}
		Since the $\ell(\cdot, \cdot)$ is Lipschitz continuous, there exists $\rho > 0$ such that for every $t \in \mathcal{N}$ and every $(\w, \Q)$,
		\begin{equation*}
		\| \mathcal{L_{\text{val}}}(\theta_T, \w, \Q) - \mathcal{L_{\text{val}}}(\theta_{opt}, \w, \Q)\| \leq \rho \| \theta_T - \theta_{opt} \|
		\end{equation*}
		Let
		\begin{equation*}
		\w_T, \Q_T = \arg\min_{\w,\Q}\mathcal{L_{\text{val}}}(\theta_T, \w, \Q)
		\end{equation*}
		
		With $T \to\infty$, it is obvious that $\mathcal{L_{\text{val}}}(\theta_T, \w, \Q)\to \mathcal{L_{\text{val}}}(\theta_{opt}, \w, \Q)$, thus we have that $\| \mathcal{L_{\text{val}}}(\theta_T, \w_T, \Q_T) - \mathcal{L_{\text{val}}}(\theta_{opt}, \w_T, \Q_T) \| \to 0$.
		\begin{eqnarray*}
			&&\lvert L(\theta_T, (w_T, \delta_T)) - L(\theta_{optimal}, (w_T, \delta_T)) \rvert \\\nonumber
			&&\leq \sup_{w, \delta}\lvert L(\theta_T, (w, \delta))  - L(\theta_{optimal}, (w, \delta)) \rvert \\ \nonumber
		\end{eqnarray*}
		
		Using the continuity of $\ell(\cdot, \cdot)$, we have $\forall (\w, \Q)$,
		\begin{eqnarray*}
			\lim_{T\to\infty}\mathcal{L_{\text{val}}}(\theta_{opt}, \w_T, \Q_T)  =\lim_{T\to\infty}\mathcal{L_{\text{val}}}(\theta_T, \w_T, \Q_T)\\
			\leq \lim_{T\to\infty}\mathcal{L_{\text{val}}}(\theta_T, \w, \Q) =\mathcal{L_{\text{val}}}(\theta_{opt}, \w, \Q) 
		\end{eqnarray*}
		Therefore, we have
		\begin{equation*}
		(\w_T, \Q_T) = \arg\min_{\w, \Q} \mathcal{L_{\text{val}}}(\theta_{opt}, \w, \Q).
		\end{equation*}
		which completes the proof.
	\end{proof}
	
	\section*{B. Impact of Iteration Number}
	
	\begin{figure}[!h]
		\centering
		\includegraphics[width=0.5\textwidth]{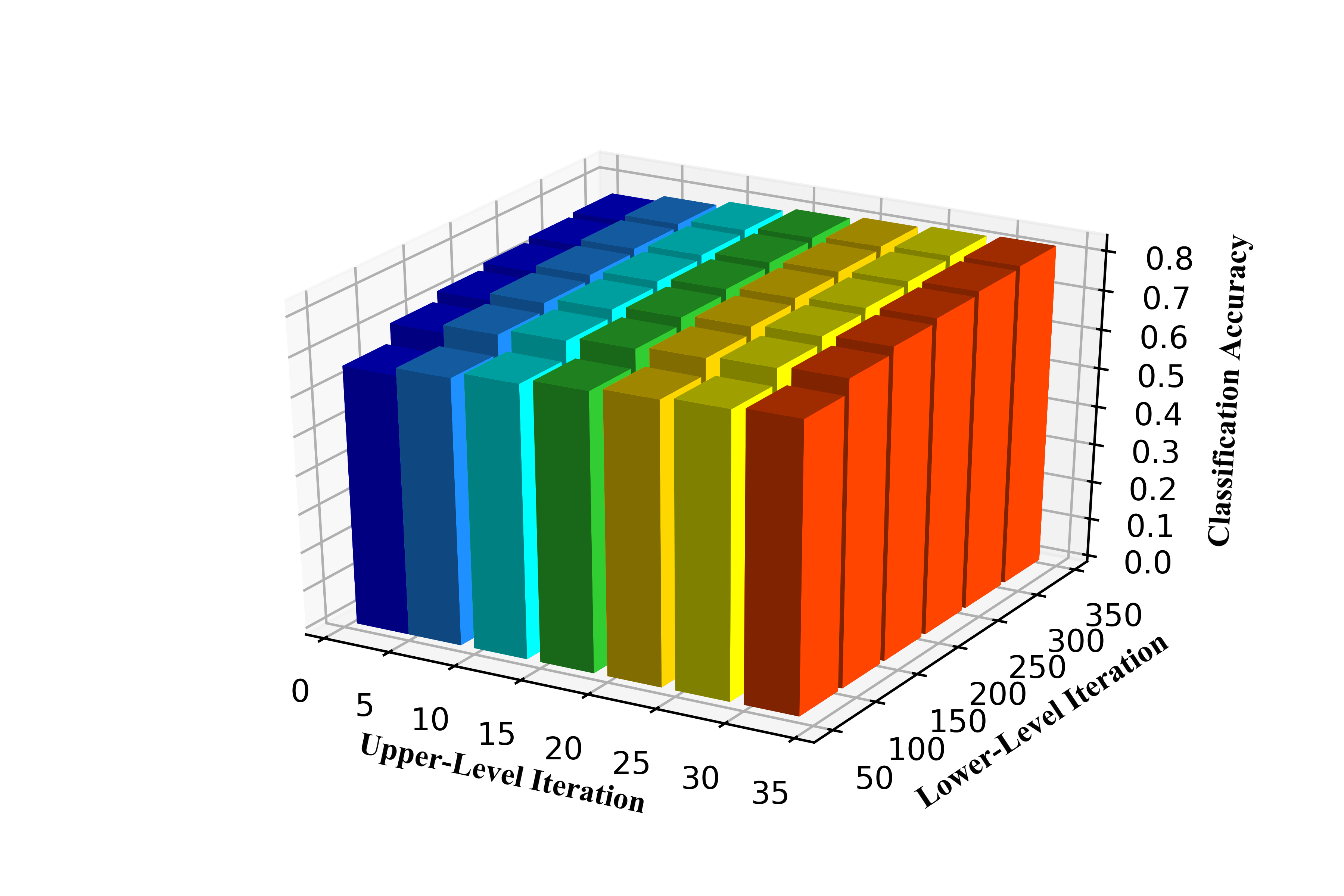}  
		\caption{The impact of the lower- and upper-level iterations.}
		\label{fig:iterations}
	\end{figure}
	
	We investigate the impact of the iteration number during the optimization dynamics on the quality of the solution. The result is plotted in Figure~\ref{fig:iterations}, where x-axis is the number of iterations for optimization of training loss, y-axis is the number of iterations for optimization of validation loss and z-axis is the classification accuracy. 
	
	From Figure ~\ref{fig:iterations}, we can see that, as the lower-level iterations increase, the better initialization we can obtain for the upper-level optimization problem, and the upper-level optimization only need a small number of iterations (such as 35 iterations) to achieve a quite good solution.
\end{document}